\documentclass[10pt,twocolumn,letterpaper]{article}

\usepackage{cvpr}
\usepackage{times}
\usepackage{epsfig}
\usepackage{graphicx}
\usepackage{amsmath}
\usepackage{amssymb}
\usepackage{amsthm}
\usepackage{color}
\usepackage{algorithm}
\usepackage{algorithmic}
\usepackage{subcaption}
\usepackage{booktabs}
\usepackage{authblk}
\usepackage{color}

\cvprfinalcopy

\DeclareMathOperator*{\argmax}{arg\,max} 
\DeclareMathOperator*{\kmax}{k-max}
\DeclareMathOperator*{\merge}{merge}

\newtheorem{theorem}{Theorem}[section]
\newtheorem{lemma}{Lemma}
\newcommand\blfootnote[1]{%
  \begingroup
  \renewcommand\thefootnote{}\footnote{#1}%
  \addtocounter{footnote}{-1}%
  \endgroup
}

\setlength{\abovedisplayskip}{3pt}
\setlength{\belowdisplayskip}{3pt}

\begin{document}

\title{Temporal Action Localization by Structured Maximal Sums}

\author[1,2]{Zehuan Yuan}
\author[2]{Jonathan C. Stroud}
\author[1]{Tong Lu}
\author[2]{Jia Deng}
\affil[1]{State Key Laboratory for Novel Software Technology, Nanjing University, China}
\affil[2]{University of Michigan, Ann Arbor}

\maketitle
\begin{figure*}[t!]
\centering
\includegraphics[width=0.8\textwidth]{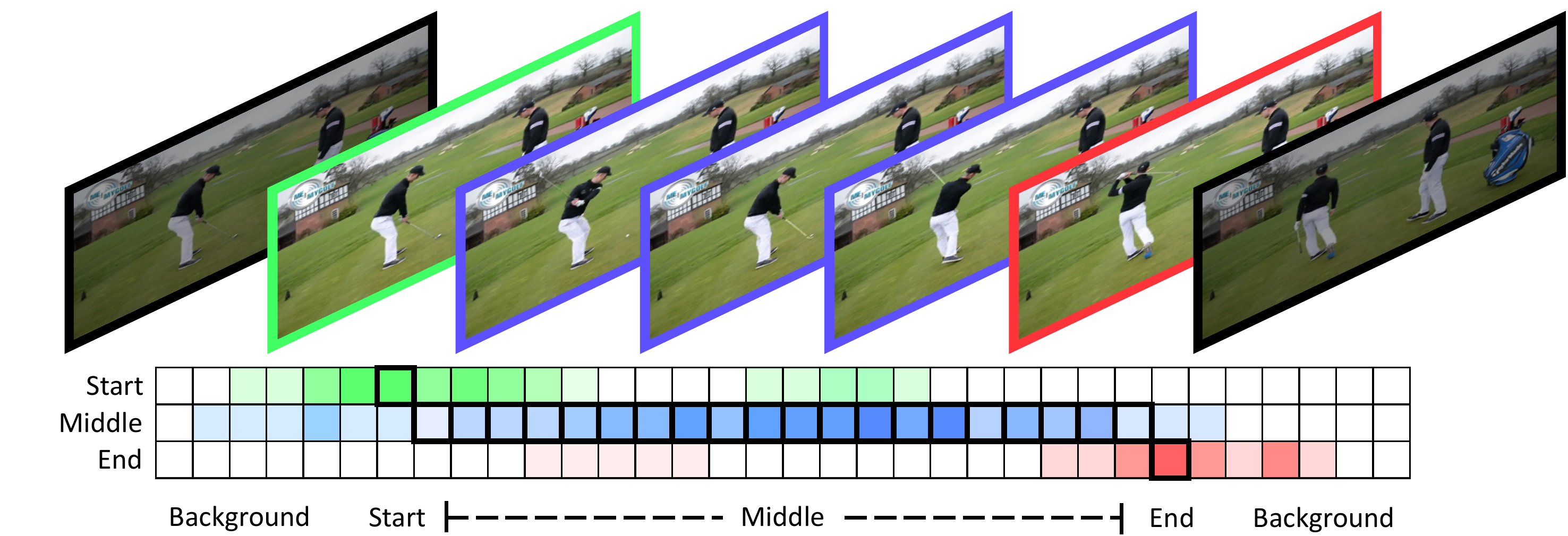}
\vspace{-2mm}
\caption{Temporal evolution of a \emph{golf-swing} action. Our system
  explicitly models evolution as a single \emph{start} frame (green),
  followed by many \emph{middle} frames (blue), and a single
  \emph{end} frame (red).}
\label{fig:evolution}
\end{figure*}
\begin{abstract}
  We address the problem of temporal action localization in videos. We
  pose action localization as a structured prediction over
  arbitrary-length temporal windows, where each window is scored as
  the sum of frame-wise classification scores. Additionally, our model
  classifies the start, middle, and end of each action as separate
  components, allowing our system to explicitly model each action's
  temporal evolution and take advantage of informative temporal
  dependencies present in this structure. In this framework, we
  localize actions by searching for the \emph{structured maximal sum},
  a problem for which we develop a novel, provably-efficient
  algorithmic solution. The frame-wise classification scores are
  computed using features from a deep Convolutional Neural Network
  (CNN), which are trained end-to-end to directly optimize for a novel
  structured objective. We evaluate our system on the THUMOS '14
  action detection benchmark and achieve competitive performance.
\end{abstract}

\vspace{-5mm}

\section{Introduction}
\label{sec:intro}

\blfootnote{This work was done while Zehuan Yuan was a visiting student at the
  University of Michigan.} In temporal action localization, we are
given a video and aim to detect \emph{if} and \emph{when} a particular
action occurs. Specifically, we answer three questions -- \emph{``is
  there an action in the video?''}, \emph{``when does the action
  start?''}, and \emph{``when does the action end?''}. By automating
this process, we can enable people to efficiently search through the
millions of hours of video data which are generated each and every
day. However, this remains a challenging problem for several reasons.
Crucially, actions have inherent temporal structure, so we require a
representation that can model the \emph{temporal evolution} of actions
in addition to their instantaneous spatial appearance. Previous
methods have either failed to model temporal evolution, or done so at
significant computational
cost~\cite{DBLP:conf/iccv/OneataVS13,DBLP:conf/nips/SimonyanZ14,
  DBLP:journals/corr/ShouWC16}. High computational cost is a
significant problem for these methods, because in many practical
applications, the videos of interest may be arbitrarily long, and
methods must gracefully scale to videos that last hours (\emph{e.g.}
movies, web videos) or even days (\emph{e.g.} security footage,
first-person vision). Finally, extracting powerful features for
detecting actions in videos remains an unsolved challenge.

To overcome these challenges, we propose a method that directly models
the temporal evolution of actions, and we develop a provably-efficient
algorithm to perform localization in this framework. Our temporal
evolution framework is based on the observation that all actions have
a \emph{start}, \emph{middle}, and \emph{end}, and that these
components each have distinct patterns of appearance and motion. We
hypothesize that by localizing these three action parts separately, we
can significantly improve localization performance by enforcing
consistent structure in their ordering. Specifically, we model an
action as a \emph{temporal window}~-- a variable-length sequence of
video frames -- and we assume that each temporal window begins with a
single \emph{start} frame, followed by one or more \emph{middle}
frames, and finally a single \emph{end} frame
(Figure~\ref{fig:evolution}). We otherwise impose no restrictive
constraints on the temporal sequence of each action. In doing this, we
recover just enough temporal information to take advantage of the
inherent structure present in each action, without requiring any
additional annotations or making unrealistic assumptions about the
composition of actions.

At test time, we localize actions by searching for the
\emph{structured maximal sum} -- the sequence of start, middle, and
end frames which has the highest sum of corresponding frame-wise
confidence scores. Solving this problem is non-trivial, as it
na\"{i}vely requires a search over a quadratic number of possible
start-end pairs. However, in Section~\ref{sec:detection}, we propose a
novel dynamic programming algorithm which provably finds the top-$k$
structured maximal sums for a video of arbitrary length. We prove that
this algorithm is efficient, and specifically we show that it finds
the structured maximal sum in linear time. Our solution is related to
that of the well-studied \emph{$k$-maximal sums problem}, for which
similar efficient algorithms exist \cite{DBLP:books/daglib/0067352}.
Our structured maximal sum algorithm enables us to gracefully scale
localization to arbitrary-length untrimmed videos, while
simultaneously encoding the temporal evolution of each action.

We classify the three action components separately using powerful
discriminative features from two-stream Convolutional Neural Networks
(CNNs)~\cite{DBLP:conf/nips/SimonyanZ14}. In
Section~\ref{sec:structuralsvm}, we train the entire system
end-to-end, using a novel structured loss function. We train and
evaluate our approach on the THUMOS'14 challenge
dataset~\cite{THUMOS15} in Section~\ref{sec:experiments} and achieve
competitive results.

Our primary contribution is a framework that allows us to model the
temporal evolution of actions without sacrificing efficient temporal
localization. Crucial to our framework is a novel, provably-efficient
algorithm, which computes the structured maximal sum in linear time.
We achieve competitive results on action detection baselines, and
present a number of ablation studies to demonstrate the contributions
of each component of our pipeline.

\section{Related work}
\label{sec:related}

Temporal action localization in videos is an active area of research,
and much recent progress has been facilitated by an abundance of
benchmark datasets and competitions which focus on temporal
localization, including the THUMOS~\cite{THUMOS15} and
ActivityNet~\cite{DBLP:conf/cvpr/HeilbronEGN15} challenges. Most prior
approaches have fallen into one of two categories: \emph{sliding
  window} and \emph{framewise} classification. In this section we will
outline the major contributions of these approaches.

\noindent \textbf{Sliding Window.} Many leading approaches for
temporal localization apply classifiers to fixed-width windows that
are scanned across each video. These approaches have the advantage
that they are able to consider contextual information and temporal
dependencies in their classifications, up to the size of the temporal
windows. Oneata {\it et al.} \cite{DBLP:conf/iccv/OneataVS13}, the
winners of the THUMOS '14 localization competition, used sliding
window classifiers applied to fisher vector representations of
improved dense trajectories features \cite{DBLP:conf/iccv/WangS13a}.
Wang {\it et al.} took a similar sliding-window approach and came in
second place in the same competition~\cite{wang2014action}. Recently,
Shou {\it et al.} proposed a sliding 3D Convolutional Neural Network
for localization, in favor of the hand-designed features of previous
methods \cite{DBLP:journals/corr/ShouWC16}. In a similar vein to our
temporal evolution model, Gaidon {\it et al.} used sliding window
classifiers to locate action parts (actoms) from hand-crafted features
\cite{DBLP:journals/pami/GaidonHS13}. The key distinction between
their sequence model and ours is that their action parts are specific
to each individual action, and must be chosen and labeled manually,
while ours uses the same parts for each action, and requires no
additional annotations.

Most sliding window approaches are applied at multiple window sizes to
account for variation in temporal scale. This leads to significant
redundant computation and makes these methods expensive to scale to
long videos. However, their success in competitions like THUMOS
demonstrate that the contextual information afforded by sliding window
methods is important for accurate localization.

\noindent \textbf{Frame-wise Classification.} Another class of popular
approaches apply classifiers to each individual frame to detect the
presence or absence of an action. Action windows are then aggregated
during post-processing, using simple non-maximum suppression or more
complex sequence models. Singh~{\it et al.}~\cite{singh2016untrimmed}
achieved competitive performance in the ActivityNet
challenge~\cite{DBLP:conf/cvpr/HeilbronEGN15} using a frame-wise
classifier to propose action locations, aggregated together by
minimizing a loss that encourages piecewise smoothness in the
detections~\cite{singh2016untrimmed}. Sun~{\it et al.} and Singh~{\it
  et al.} applied frame-wise CNN feature detectors, connected by
recurrent LSTM modules~\cite{DBLP:conf/mm/SunSSN15,singhmulti}.
Richard~{\it et al.} adopt language models applied to traditional
motion features~\cite{DBLP:journals/cvpr/Alexander16}.

While each of these methods are able to incorporate some temporal
context in post-processing, they each either rely on hand-designed
frame-level features or optimize some frame-level loss. Our approach,
by contrast, is trained end-to-end, and directly optimizes a
structured loss over temporal action windows, allowing it to learn
features that facilitate accurate action localization.

\noindent \textbf{Other Approaches.} Max-margin losses have been used
to detect actions in an online setting~\cite{hoai2014max} and from 3D
video features~\cite{wei2013concurrent}, but not in an end-to-end
pipeline. Many works have taken other approaches to model the temporal
structure present in actions~\cite{tang2012learning}. Recently,
Yeung~{\it et al.}~\cite{DBLP:journals/corr/YeungRMF15} proposed using
reinforcement learning to actively search for informative frames in a
video before directly regressing the start and end points of each
action. Their approach is efficient, in that it only needs to observe a
few frames before making each prediction, but it does not aggregate
information over the entire video to achieve the best performance.

The work that is perhaps most related to ours is
\cite{DBLP:conf/nips/TranY12}, which approaches action localization as
a structured prediction over spatio-temporal paths through a video,
utilizing the max-path algorithm of \cite{DBLP:conf/cvpr/TranY11} to
perform efficient inference. Their method is capable of performing
both spatial and temporal localization jointly, and similarly uses a
max-margin structured regression to learn frame-wise classification
scores. Our method, however, has the advantage of modeling the
temporal evolution of actions, and utilizes powerful CNN features,
which we train end-to-end.

\section{Localization as Structured Prediction}
\label{sec:formulation}

Suppose we are given a video
$v=\{x_1,x_2,\cdots , x_n\} \in \mathcal{V}$, where $x_t$ denotes the
frame at timestep $t$, and $n$ is the total number of frames in the
video. We define a \emph{temporal window} to be a contiguous segment
of video frames $y=\{x_s,x_{s+1},\cdots,x_e\} \in \mathcal{Y}$, where
$s$ and $e$ are the indices of the start and end frames, respectively,
and $1\leqslant s \leqslant e\leqslant n$. Furthermore, suppose that
each frame has a real-valued \emph{frame-wise score}
$f(x_t) \in \mathbb{R}$ which can be positive or negative, which
indicates our confidence in frame $x$ belonging to an instance of a
particular action class. Note that, for convenience, we express
$f(x_t)$ as a function of only a single frame $x_t$, while in practice
$f$ may depend on features extracted from the entire video. For a
video and corresponding temporal window, we define the
\emph{confidence score}
$F: \mathcal{V} \times \mathcal{Y} \mapsto \mathbb{R}$ as the sum of
framewise scores, that is, $F(v, y) = \sum_{t=s}^{e}f(x_t) \ $. The
predicted temporal window for video $v$ is the one that maximizes the
confidence score, in particular,
$\hat{y} = \argmax_{y\in \mathcal{Y}} \ F(v, y) \ $.

Na\"ively, by searching over all possible start- and end-point pairs,
this maximization requires a search over a space quadratic in the
number of frames. For long videos, this is impractical. However,
because $F$ is decomposable into frame-wise scores, we can pose this
as the classic \emph{maximum sum} problem
\cite{DBLP:books/daglib/0067352}, for which there exists an
$\mathcal{O}(n)$-time solution~\cite{DBLP:conf/ispan/BaeT04}. In
practical settings, we may have multiple action instances in a single
video. Finding the $k$-best windows can similarly be posed as a
$k$-maximal sums problem, for which there exists a
$\mathcal{O}(n+k)$-time solution~\cite{brodal2007linear}. In the
following section, we model more complex temporal dependencies.

\subsection{Temporal Evolution Model}
\label{sec:evolution}

Actions vary greatly in their appearance and motion characteristics
over time. By explicitly modelling the \emph{temporal evolution} of an
action, we can take advantage of this inherent temporal structure. In
particular, we notice that the frames at the start- and end-points of
an action instance tend to vary greatly in appearance, as the actor
will often change position (as in \emph{basketball-dunk}) or pose (as
in \emph{golf-swing}) over the course of the action. Additionally,
frames in the middle of the action instance tend to have different
motion characteristics than the start- and end-points as the actor
performs complex body movements.

The start and end of an action are of particular importance in
temporal localization, as they define the boundaries of a single
action instance. In order to encourage precise localization, we
explicitly model each action as a single \emph{start} frame, followed
by an arbitrary-length series of \emph{middle} frames, and finally a
single \emph{end} frame. Suppose we have separate signed frame-wise
confidence scores $f^s(x)$, $f^m(x)$, and $f^e(x)$, for the
\emph{start}, \emph{middle}, and \emph{end} components, respectively.
Using this new formulation, we can rewrite the confidence score
$F(v, y)$ for a video $v$ as
\begin{equation}
\label{equa:confidence2}
F(v, y)=\lambda_sf^s(x_s)+\lambda_m\sum_{t=s+1}^{e-1}f^m(x_t)+\lambda_ef^e(x_e)\\
\end{equation}
where $\lambda_s$, $\lambda_m$ and $\lambda_e$ are parameters that
specify the relative importance of each action part. In our
experiments, we set $\lambda_s =\lambda_m = \lambda_e = 1$ except
where stated otherwise.

This generalization comes with a number of advantages over the
single-class confidence score without temporal evolution. First, we
are penalized heavily for detections that fail to find good matchings
for the start and end frames. This enforces temporal consistency, as
the best detections will be those that successfully match each of the
three components in their correct order. This resistance to illogical
matchings gives us robustness to variance in the frame-wise scores.
This makes us less likely to merge consecutive or partial instances of
an action into a single detection, and encourages the detector to
stretch each detection to the full extent of the action instance,
preventing over- and under-segmentation. Finally, start and end labels
are readily available from existing temporal action annotations,
meaning that we require no additional training data. Finally, since
every action has a \emph{start}, \emph{middle}, and \emph{end}, this
formulation makes no restrictive assumptions about the structure of
complex actions.

\section{Structured Maximal Sums}
\label{sec:detection}

Given frame-wise scores $f^s$, $f^m$, and $f^e$, and a video $v$, our
goal is to detect all instances of a particular action. We represent
these detections as the top-$k$ temporal windows, as ordered by their
confidence scores in Equation \ref{equa:confidence2}. In Section
\ref{sec:formulation}, we showed how, without temporal evolution,
localization can be framed as a $k$-maximal sums problem
\cite{DBLP:conf/ispan/BaeT04}. However, our formulation introduces
additional challenges, as we now need to compute the top-$k$
\emph{structured} maximal sums (Figure~\ref{fig:evolution2}), which
previous work does not address. To solve this problem, we introduce
the Structured Maximal Sums (SMS) algorithm
(Algorithm~\ref{alg:smekoptimal}), which efficiently finds the top-$k$
structured maximal sums.

The Structured Maximal Sums algorithm makes a single pass through the
video, maintaining the value of the $K$-best windows found so far in
the list $\mathrm{kmax}[:]$, which is kept in sorted order. It also
keeps track of the values of the $K$-best \emph{incomplete temporal
  windows} that end at frame $i$ in $\mathrm{rmax}[:]$, that is, the
windows that end at $i$ but do not include an endpoint $f^e[i]$. We
now prove the correctness of the SMS algorithm.

For clarity, we first introduce the following notation. We assume that
all frame-wise classifier scores are precomputed, and are contained in
ordered lists $f^s[1 \cdots n]$, $f^m[1 \cdots n]$, and
$f^e[1 \cdots n]$, and the shorthand $[:]$ refers to all elements in a
list simultaneously. Similarly, we denote adding a value to each
member of a list as $f[:] + n$. We denote the operation of inserting
an item $s$ into a sorted list $\mathrm{kmax}$ as
$\merge(s, \mathrm{kmax})$. We denote the $k$-th maximum value of a
function $g$ over a discrete space $\mathcal{X}$ as
$\kmax_{x \in \mathcal{X}} g(x)$.

\begin{figure}[t]
\centering
\includegraphics[width=1.0\linewidth]{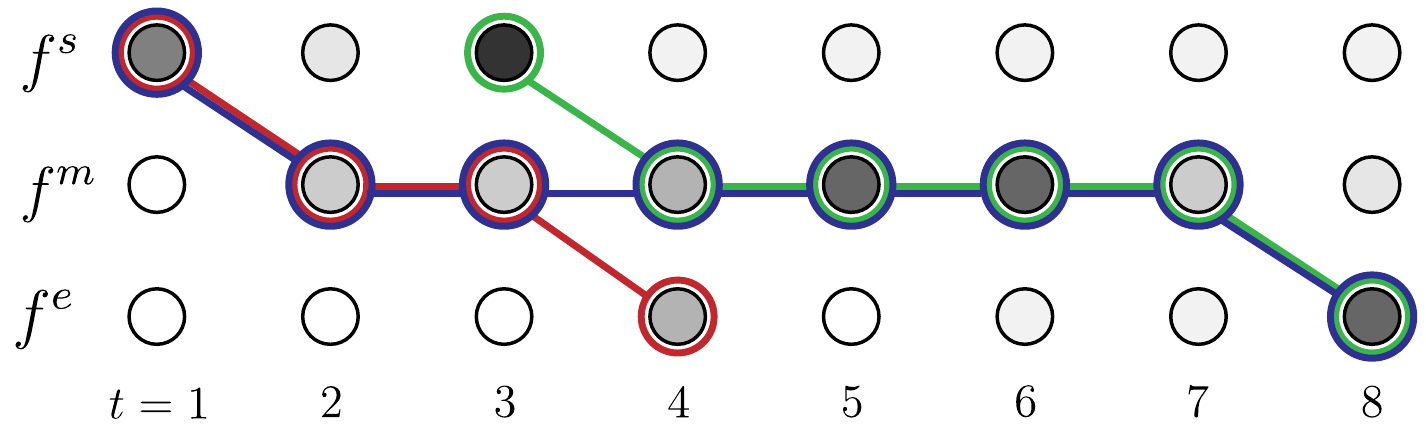}
\caption{A depiction of the structured maximal sums problem. Gray
  circles depict classification scores for each of the three action
  components (darker is higher), and colored outlines depict plausible
  temporal localizations. The three windows depicted here are red
  $(t = 1 \dots 4)$, blue $(t = 1 \dots 8)$, and green
  $(t = 3 \dots 8)$}
\label{fig:evolution2}
\vspace{-3mm}
\end{figure}

\begin{lemma}
\label{lem:rmax}
Let $\mathrm{rmax}_i[:]$ denote the list of $K$-best incomplete
temporal windows ending at timestep $i$, not including the end-point
$f^e[i]$. Namely, let
\begin{equation}
  \mathrm{rmax}_i[k]= \kmax_{j \in \{1, \cdots, i\}}
  \Big\{f^s(j)+\sum_{q = j+1}^{i}f^m(q) \Big\} \ .
\end{equation}
Then $\merge(f^s[i+1], \mathrm{rmax}_i[:]+f^m[i+1])$ gives the list of
the $K$-best incomplete temporal windows ending at timestep $i+1$.
\end{lemma}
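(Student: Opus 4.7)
The plan is to verify Lemma~\ref{lem:rmax} by a direct unrolling of the definition of $\mathrm{rmax}_{i+1}[:]$ together with two order-preserving observations. I would first write out
\[
\mathrm{rmax}_{i+1}[k] = \kmax_{j\in\{1,\ldots,i+1\}} \Big\{ f^s(j) + \sum_{q=j+1}^{i+1} f^m(q) \Big\}
\]
and then split the index set $\{1,\ldots,i+1\}$ into the singleton $\{i+1\}$ and the range $\{1,\ldots,i\}$.

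For $j = i+1$ the inner sum is empty, so the value contributed is simply $f^s(i+1)$. For $j \in \{1,\ldots,i\}$ I would peel off the last term of the sum,
\[
f^s(j) + \sum_{q=j+1}^{i+1} f^m(q) = \Big( f^s(j) + \sum_{q=j+1}^{i} f^m(q) \Big) + f^m(i+1),
\]
exhibiting each such value as a candidate entering $\mathrm{rmax}_i$'s maximization plus the constant $f^m(i+1)$.

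The next step is the small but essential observation that adding the same constant $f^m(i+1)$ to every element of a set preserves the order of its elements; in particular, the top $K$ values of $\{v_j + f^m(i+1) : j=1,\ldots,i\}$ are exactly the top $K$ values of $\{v_j : j=1,\ldots,i\}$, each shifted by $f^m(i+1)$. Therefore the top-$K$ restriction of the $j\leq i$ contributions is precisely the list $\mathrm{rmax}_i[:] + f^m[i+1]$. Taking the top-$K$ over the full index set $\{1,\ldots,i+1\}$ is then the top-$K$ of the union of this shifted list with the single value $f^s(i+1)$, which is by definition $\merge(f^s[i+1], \mathrm{rmax}_i[:]+f^m[i+1])$.

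The main obstacle, such as it is, is being precise about the second observation, namely that the top-$K$ of a union $A \cup B$ can be recovered from the top-$K$ of $A$ and $B$ individually (here with $|B|=1$). I would justify this by noting that any element of $A$ not among the top-$K$ of $A$ is dominated by at least $K$ elements of $A$, hence cannot appear among the top-$K$ of $A\cup B$ once a single extra element is added. This justifies discarding all of $\mathrm{rmax}_i$'s tail before merging, and the rest is bookkeeping. Combining the two observations yields the equality claimed in the lemma.
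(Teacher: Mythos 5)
Your proof is correct and follows essentially the same route as the paper's: decompose the windows ending at $i+1$ into the one starting fresh at $i+1$ versus continuations of windows ending at $i$, use the fact that adding the constant $f^m[i+1]$ preserves the ranking, and merge. You simply make explicit the two order-theoretic facts (shift-invariance of the top-$K$, and that the top-$K$ of a union is recoverable from the top-$K$ of the parts) that the paper's proof treats as self-evident.
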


\begin{proof}
  The $k$-th best incomplete window ending at frame $i+1$ is either
  the window that starts at $i+1$, or it is a continuation of one of
  the $K$-best windows that ended at frame $i$.
  $\mathrm{rmax}_i[:]+f^m[i+1]$ gives the list of all continuations of
  the previous $K$-best incomplete windows. We insert $f^s[i+1]$ to
  this list, and discard at most one of the $K$ continuations if
  $f^s[i+1]$ is greater than it. What remains are the $K$-best
  incomplete temporal windows ending at frame $i+1$.
\end{proof}

\begin{lemma}
  Let $\mathrm{kmax}_i[:]$ denote the list of the $K$ best temporal
  windows ending at or before frame $i$. Then
  $\merge(\mathrm{rmax}_{i}[:]+f^e[i+1], \mathrm{kmax}_{i}[:])$ gives
  the list of the $K$-best temporal windows ending at timestep $i+1$.
\end{lemma}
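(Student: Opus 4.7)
The plan is to partition the set of temporal windows ending at or before frame $i+1$ into two disjoint groups based on whether the window ends at frame $i+1$ exactly or at some earlier frame, handle each group using what we already know, and then argue that merging the two top-$K$ sublists produces the global top-$K$.

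First I would observe that every temporal window $y$ with end-point $e \le i+1$ satisfies exactly one of: (a) $e \le i$, in which case $y$ is already one of the windows tracked in $\mathrm{kmax}_i[:]$, so the $K$-best such windows are by definition the entries of $\mathrm{kmax}_i[:]$; or (b) $e = i+1$, in which case $y = (x_s, x_{s+1}, \ldots, x_i, x_{i+1})$ and its confidence score, by Equation~\eqref{equa:confidence2}, is
\begin{equation*}
f^s(s) + \sum_{q=s+1}^{i} f^m(q) + f^e(i+1) \ .
\end{equation*}
The first two terms are exactly the score of an incomplete window ending at $i$ in the sense of Lemma~\ref{lem:rmax}, so the set of attainable scores in case (b) is $\{\,r + f^e[i+1] : r \in \mathrm{rmax}_i[:]\,\}$ together with the strictly smaller scores discarded by Lemma~\ref{lem:rmax}. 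Since Lemma~\ref{lem:rmax} guarantees that $\mathrm{rmax}_i[:]$ holds the top-$K$ of the incomplete windows, the list $\mathrm{rmax}_i[:] + f^e[i+1]$ holds the top-$K$ windows in group (b).

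Finally I would invoke the standard fact that if $A$ and $B$ are two disjoint sets and $A_K$, $B_K$ are their top-$K$ sublists, then the top-$K$ of $A \cup B$ is obtained by merging $A_K$ and $B_K$ and truncating to $K$ entries: any window not in $A_K \cup B_K$ has score bounded above by the $K$-th element of its own group, hence cannot displace an entry of the merged top-$K$. Applying this to groups (a) and (b) yields exactly $\merge(\mathrm{rmax}_i[:] + f^e[i+1],\, \mathrm{kmax}_i[:])$ truncated to $K$ entries, which is the claim. The only step that requires care rather than mere bookkeeping is the appeal to Lemma~\ref{lem:rmax} to translate ``incomplete window'' into ``window ending at $i+1$'', and verifying that the range of $s$ (i.e.\ $1 \le s \le i$, possibly with $s = i$ giving a window with no middle frames) is fully covered by the support of $\mathrm{rmax}_i$; both are immediate from the definitions.
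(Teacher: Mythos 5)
Your proof is correct and follows essentially the same route as the paper's: split windows ending at or before $i+1$ into those already tracked in $\mathrm{kmax}_i$ and those ending exactly at $i+1$ (obtained by completing an incomplete window from Lemma~\ref{lem:rmax} with $f^e[i+1]$), then merge the two top-$K$ lists. You simply make explicit the disjoint-partition and merge-of-top-$K$-sublists facts that the paper leaves implicit.
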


\begin{proof}
  We know from Lemma 1 that $\mathrm{rmax}_i[:]$ gives the $K$-best
  \emph{incomplete} temporal windows ending at frame $i$. The $k$-th
  best temporal window ending at frame $i+1$ is either one of these
  incomplete windows, completed by adding $f^e[i+1]$, or it is one of
  the top complete windows already contained in $\mathrm{kmax}_i$. By
  merging these two lists, we select the top-$K$ windows overall,
  preserving the top-$K$ complete temporal windows.
\end{proof}

\begin{algorithm}[t]
  \caption{Top-$K$ Structured Maximal Sums}
\label{alg:smekoptimal}
\begin{algorithmic}
  \REQUIRE Frame-wise scores $f^s[1 \cdots n]$, $f^m[:]$, and $f^e[:]$ \\
  \ENSURE $\mathrm{kmax}[1\cdots K]$ \FOR{each $k\leftarrow 1$ to $K$}
  \STATE $\textrm{kmax}[k]\leftarrow-\infty$, $\ \textrm{rmax}[k]\leftarrow-\infty$
\ENDFOR
\STATE $\textrm{rmax}[1]\leftarrow f^s[1]$ \hfill \COMMENT{Initialization}
\FOR {each $i\leftarrow 2$ to $n$}
  \FOR {each $k\leftarrow 1$ to $K$}
    \STATE $s \leftarrow \textrm{rmax}[k]+f^e[i]$
    \STATE $\textrm{rmax}[k]=\textrm{rmax}[k]+f^m[i]$
    \STATE $\textrm{kmax}[:] = \merge(s, \textrm{kmax}[:])$ \hfill
  \ENDFOR
  \STATE  $\textrm{rmax}[:] = \merge(f^s[i], \textrm{rmax}[:])$ 
\ENDFOR
\end{algorithmic}
\end{algorithm}
\vspace{-2mm}

Each $\mathrm{rmax}_i$ and $\mathrm{kmax}_i$ (including the call to
$\merge$) can be constructed in $\mathcal{O}(K)$ time
\cite{DBLP:conf/ispan/BaeT04}. We compute $\mathrm{kmax}_i$ for all
$i \in \{1, \dots, n\}$, so the total time complexity is
$\mathcal{O}(nK)$. This result, and the results of the above lemmas,
lead us to our primary theoretical contribution:

\begin{theorem}
\label{th:SME}
The SMS algorithm computes the $K$-best temporal windows in a video of
length $n$ in $\mathcal{O}(nK)$ time.
\end{theorem}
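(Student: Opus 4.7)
The plan is to prove Theorem~\ref{th:SME} in two parts: correctness of the returned list and the stated running time. For correctness, I would induct on the frame index $i$ with the joint invariant that, immediately after outer-loop iteration $i$, the working array $\textrm{rmax}[:]$ equals $\mathrm{rmax}_i[:]$ in the sense of Lemma~1, and the working array $\textrm{kmax}[:]$ equals $\mathrm{kmax}_i[:]$ in the sense of Lemma~2. The base case is read directly off the initialization: $\textrm{rmax}[1]\leftarrow f^s[1]$ is the unique length-one incomplete window, and $\textrm{kmax}[:]\leftarrow-\infty$ represents the empty set of complete windows ending before frame $2$.

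For the inductive step from $i$ to $i+1$, I would trace the inner loop against the two lemmas. The local variable $s \leftarrow \textrm{rmax}[k]+f^e[i+1]$ is exactly the completion of the $k$-th best incomplete window at $i$ by the new end-point, so the successive calls $\merge(s,\textrm{kmax}[:])$ over $k=1,\dots,K$ realize the operation in Lemma~2. The subsequent update $\textrm{rmax}[k] \leftarrow \textrm{rmax}[k]+f^m[i+1]$ inside the inner loop, followed by the single merge $\merge(f^s[i+1],\textrm{rmax}[:])$ after the inner loop, together realize the operation in Lemma~1. Setting $i=n$ then yields that $\textrm{kmax}[:]$ holds the $K$-best complete temporal windows in the entire video.

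For the running time, I would note that the outer loop executes $n-1$ times; each iteration performs $K$ arithmetic operations together with merges into sorted lists of size $K$. Using the $\mathcal{O}(K)$-per-iteration construction cited in \cite{DBLP:conf/ispan/BaeT04} for both the $\textrm{rmax}$ and $\textrm{kmax}$ updates, the total work is $\mathcal{O}(nK)$ as claimed.

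The step I expect to be the main obstacle is aligning the in-place, interleaved updates of $\textrm{rmax}[k]$ and $\textrm{kmax}[:]$ in Algorithm~\ref{alg:smekoptimal} with the clean, batched statements of the lemmas. In particular, I must argue that the completion $s$ is formed \emph{before} $\textrm{rmax}[k]$ is overwritten with its middle-score update, and that $\merge(f^s[i+1],\textrm{rmax}[:])$ is applied only \emph{after} all $K$ candidate completions have been emitted to $\textrm{kmax}[:]$; otherwise the correspondence with Lemmas~1 and~2 fails. Once this sequencing is verified, combining the two lemmas under the induction closes the proof.
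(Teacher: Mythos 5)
Your proposal is correct and follows essentially the same route as the paper: the paper's (very terse) argument is exactly to combine Lemmas~1 and~2 inductively for correctness and to invoke the $\mathcal{O}(K)$-per-frame merge cost from \cite{DBLP:conf/ispan/BaeT04} for the $\mathcal{O}(nK)$ bound. Your added care about the sequencing of the in-place updates (forming $s$ before overwriting $\textrm{rmax}[k]$, and deferring the $\merge(f^s[i],\textrm{rmax}[:])$ until after the inner loop) is a detail the paper leaves implicit but does not change the approach.
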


We note that, while this algorithm as written only computes the
\emph{scores} of the top-$K$ temporal windows, our implementation is
able to recover the windows themselves. This is accomplished with
simple bookkeeping which keeps track of the temporal windows' start-
and end-points as they are added to the $\mathrm{rmax}$ and
$\mathrm{kmax}$ lists.

\section{Training}

\begin{figure*}[t]
\centering
\begin{subfigure}{0.495\linewidth} \centering
  \includegraphics[width=0.95\textwidth]{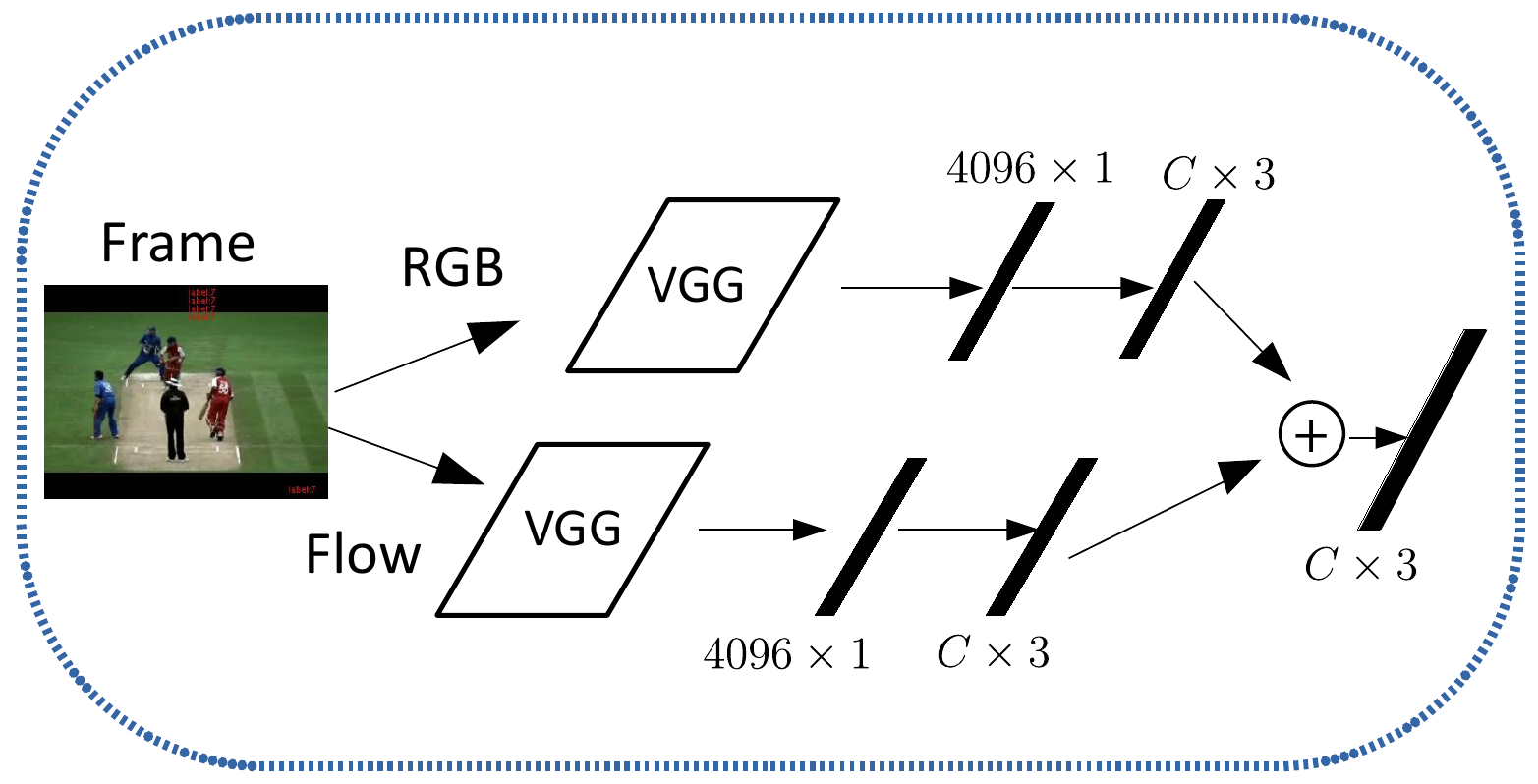}
\end{subfigure}
\begin{subfigure}{0.495\linewidth} \centering
  \includegraphics[width=0.95\textwidth]{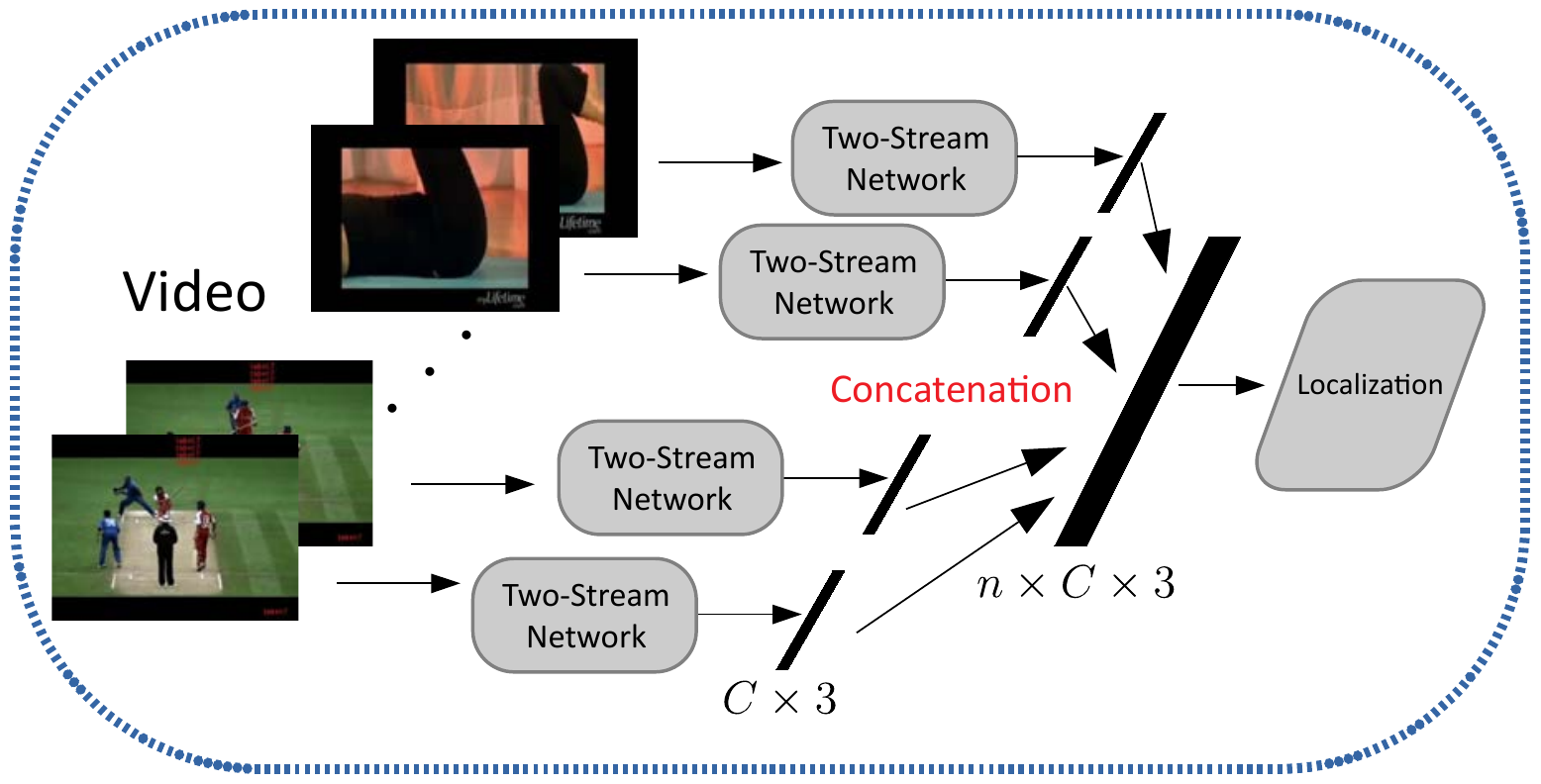}
\end{subfigure}
\vspace{-1mm}
\caption{Diagram of our localization architecture. (left) We use a
  two-stream network with a VGG backbone architecture to generate
  frame-wise confidence scores. (right) Scores from $n$ frames are
  concatenated and localization is performed.}
\vspace{-3mm}
\label{fig:architecture}
\end{figure*}

So far, we have assumed that all frame-wise action scores $f^s$,
$f^m$, and $f^e$ are computed beforehand. In this section, we describe
how these frame-wise score functions are learned. We use deep
Convolutional Neural Networks (CNNs) to produce the score functions,
and introduce a structured loss function which can be used to train
these CNNs in an end-to-end framework.

\subsection{Network Architecture}
\label{sec:features}

We adopt the two-stream network architecture of
\cite{DBLP:conf/nips/SimonyanZ14} to extract deep spatio-temporal
features for each video frame. The two-stream architecture consists of
two Convolutional Neural Networks (CNNs), \emph{Spatial-CNN} and
\emph{Motion-CNN}. The first network stream, \emph{Spatial-CNN},
operates on the color channels of a single video frame, capturing
features from the static appearance of the scene. The second stream,
\emph{Motion-CNN}, operates on dense optical flow fields computed
between adjacent frames, capturing distinctive motion features and
pixel trajectories over time. However, unlike
\cite{DBLP:conf/nips/SimonyanZ14}, we adopt the larger VGG 15-layer
network from \cite{DBLP:journals/corr/SimonyanZ14a} as the backbone
architecture for each of the two streams. For each stream, we produce
$(C\times 3)$-dimensional outputs, where $C$ is the total number of
action classes. Finally, we average the frame-wise scores from the two
streams and concatenate the results across frames. This architecture
is pictured in Figure~\ref{fig:architecture}.

\subsection{Structured Loss}
\label{sec:structuralsvm}

Our model minimizes a video-level structured loss function, rather
than the frame-wise loss function used to train the typical two-stream
action recognition architecture. By directly optimizing for temporal
action localization, we enable the frame-wise scores to take into
account the temporal evolution of actions. This enables a level of
fine-tuning that would not be possible by optimizing for a frame-wise
objective. As in the typical two-stream architecture, we first
pre-train each stream separately and fuse by finetuning.

We have a dataset $V = \{v_1,v_2,\cdots,v_m\}$ of $m$ training videos
and labels $Y=\{y_1, y_2, \cdots, y_m\}$. Each video
$v_i = \{x^{(i)}_1, \cdots, x^{(i)}_{n_i}\} \in \mathcal{V}$ can be
arbitrarily long, and its length is denoted $n_i$. For simplicity, we
assume the training videos contain only a single action instance.
These labels $y_i = (s^{(i)}, e^{(i)}, \ell^{(i)}) \in \mathcal{Y}$
consist of a start index $s$, end index $e$, and action label $\ell$.
Our goal is to learn a confidence function
$F: \mathcal{V} \times\mathcal{Y} \mapsto\mathbb{R}$, which measures
how likely it is that a particular action instance is present in the
video. We require $F$ to take on the frame-wise summation form as in
Equation~\ref{equa:confidence2}. We denote the learnable parameters of
$F$, namely those of the CNNs, as $\mathbf{w}$. We use the notation
$F(v,y;\mathbf{w})$ to denote the confidence score produced for a
video $v$ and window $y$ with parameters $\mathbf{w}$. 

For a training video $v_i$, we define the localization loss
$\mathcal{L}_{loc}$ as the gap between the highest-scoring temporal
window and the ground truth label for the action $\ell^i$:
\begin{equation}
  \mathcal{L}_{loc} (v_i) = \bigg[\max_{y\neq y_i}\big\{\Delta(y_i,y)+F(v_i,y;\mathbf{w})\big\}-F(v_i,y_i;\mathbf{w})\bigg]_+,
\end{equation}
where $[\cdot]_+ = \max(0, \cdot)$ is the hinge loss function
\cite{DBLP:conf/nips/GentileW98}. The $\Delta$ term is added to weaken
the penalty on windows with high overlap with ground truth, and is
defined as
$\Delta(y,\overline{y})=|y\cup\overline{y}|-|y\cap\overline{y}|$,
where each predicted window $y$ and $\overline{y}$ is a set of video
frames and $|\cdot|$ is the cardinality.

To further make the network more discriminative, we introduce a
classification loss $\mathcal{L}_{cls}$, which enforces that the
estimated windows of other actions should have lower scores than those
of the ground truth action class. We define
\begin{equation}
  \mathcal{L}_{cls}(v_i)= \frac{1}{C-1}\big[M+\max_{y: \  \ell\neq \ell^i}F(v_i,y;\mathbf{w})-F(v_i,y_i;\mathbf{w})\big]_+,
\end{equation}
where $M$ is a fixed parameter that ensures we do not penalize the
detection if the distance is already lower than $M$. In our
experiments, we set $M$ to be the ground truth window length $|y_i|$
of the video $v_i$.

The full structured objective $\mathcal{L}$ is a sum of the two losses
over all videos in the training set, defined as follows:
\begin{equation}
\mathcal{L}(V)=\sum_{i=1}^{m}\big(\mathcal{L}_{loc}(v_i)+\lambda\mathcal{L}_{cls}(v_i)\big)
\ ,
\end{equation}
where $\lambda$ weights the relative importance of the two loss
functions. By default, we set $\lambda = 0.5$.

Note that both loss functions are typical structural SVM losses and
the parameters $\mathbf{w}$ can therefore be learned in a similar
end-to-end fashion \cite{DBLP:journals/mp/Shalev-ShwartzSSC11}. Since
both the localization and classification losses are
sub-differentiable, the parameters of the two CNN streams can be
learned by backpropagation. Specifically, for one layer $l$, the
gradient of either loss on one video $\mathcal{L}_{(\cdot)}(v_i)$ with
respect to that layer's parameters, $w^{(l)}$, can be calculated as
\begin{align}
\begin{split}
\label{equa:dloss}
\frac{\partial \mathcal{L}_{(\cdot)}(v_i)}{\partial \mathbf{w}^{(l)}}
= \bigg(&\frac{\partial \mathcal{L}_{(\cdot)}(v_i)}{\partial
  F(v_i,y_i^*)}\frac{\partial F(v_i,y_i^*)}{\partial \bf{f}} \\
&-\frac{\partial \mathcal{L}_{(\cdot)}(v_i)}{\partial
  F(v_i,y_i)}\frac{\partial F(v_i,y_i)}{\partial \bf{f}}\bigg)
\frac{\partial \bf{f}}{\partial \bf{w}^{(l)}}
\end{split}
\end{align}
where $y_i^*$ represents $\argmax_y (\Delta(y,y_i)+F(v_i,y))$ for
$\mathcal{L}_{loc}$ and $\argmax_{y,l\neq \ell^i} F(v_i,y)$ for
$\mathcal{L}_{cls}$. $\bf{f}$ is the set of frame-wise confidence
scores produced by the neural networks.

\begin{algorithm}[t]
\caption{Loss-Augmented Stuctured Maximal Sum}
\label{alg:lossaugmented}
\begin{algorithmic}
  \REQUIRE Confidence scores $f^s[1 \cdots n]$, $f^m[:]$ and $f^e[:]$;
  ground truth window $y=\{s, e\}$\\
  \ENSURE $\mathrm{smax}$ \STATE $\mathrm{smax}\leftarrow-\infty$;
  $\mathrm{rsum}[1]\leftarrow-\infty$; $p \leftarrow 0$ \hfill
  \COMMENT{Initializaition} \FOR{each $i\in [1,s)\cup (e,n]$} \STATE
  $f^s[i]\leftarrow f^s[i]+1$; $f^m[i] \leftarrow f^m[i]+1$;
  $f^e[i]\leftarrow f^e[i]+1$;
\ENDFOR
\FOR {each $j\leftarrow 2$ to $N$}
\STATE $len\leftarrow\max [0,\min(e-s+1,e-j)]$
\STATE $\mathrm{rsum}[j]\leftarrow\max(\mathrm{rsum}[j-1]+f^m[j],p+f^s[j])$
\STATE $\mathrm{smax}\leftarrow\max(\mathrm{smax},\mathrm{rsum}[j-1]+len+f^e[j])$
\IF{$j\in[s,e]$}
\STATE $p\leftarrow p+1$
\ENDIF
\ENDFOR
\end{algorithmic}
\end{algorithm}

The gradient of $\bf{f}$ with respect to the network parameters,
$\frac{\partial \bf{f}}{\partial \bf{w}^{(l)}}$, can be computed via
backpropagation, as is standard for CNNs without the structured
objective. Therefore, it remains to compute two gradients: (1) the
gradient of the confidence function $F$ w.r.t. the classifiers
$\bf{f}$ and (2) the gradient of the objective function
$\mathcal{L}_{(\cdot)}$ w.r.t. $F$. To compute (1), we recall that
confidence function is simply the summation of the action parts
scores, so its gradient computation is straightforward. Although (2)
is not differentiable, it is in fact sub-differentiable, so we compute
a subgradient:
\begin{equation}
  \label{equa:subgradient}
\frac{\partial \mathcal{L}}{F(v_i,y_i^*)}=
\begin{cases}
  1& \text{if } \Delta(y_i,y_i^*)+F(v_i,y_i^*)-F(v_i,y_i)> 0\\
  0& \text{otherwise}.\\
\end{cases}
\end{equation}
This allows us to train end-to-end with subgradient descent.

To compute the subgradient, we need to find the best window $y_i^*$.
We use the SMS algorithm (Algorithm~\ref{alg:smekoptimal}) to find
$y_i^*$ in $\mathcal{L}_{cls}$. However, because of the $\Delta$ term
in the maximization in $\mathcal{L}_{loc}$, in order to compute
$y_i^*$ we are required to perform a maximization of the
\emph{loss-augmented confidence}. In
Algorithm~\ref{alg:lossaugmented}, we modify the SMS algorithm to
include this term, achieving the same linear time complexity,
guaranteeing that this can be computed efficiently during training.
Additionally, we only compute the top detection.

\section{Experiments}
\label{sec:experiments}

We evaluate our method on the THUMOS'14 dataset
\cite{THUMOS15}. Our implementation is built on Caffe
\cite{DBLP:conf/mm/JiaSDKLGGD14}.

\noindent {\bf Two-stream neural networks.} The inputs to Spatial-CNN
are RGB video frames cropped to $224 \times 224$ with the mean RGB
value subtracted. The inputs to Motion-CNN are dense optical flow
channels computed by the TVL1 optical flow algorithm
\cite{DBLP:conf/dagm/ZachPB07}. We scale each optical flow image to be
between $[1,255]$ and stack the flows of $10$ frames in both
directions to form a $224\times224\times20$ 3D volume. Spatial-CNN is
pre-trained for object recognition on ImageNet \cite{ILSVRC15}, and
Motion-CNN is pre-trained for action classification on UCF101
\cite{soomro2012ucf101}. We train Spatial-CNN and Motion-CNN
separately, then jointly fine-tune both of their final two
fully-connected layers. Additionally, we adopt multi-scale random
cropping for both streams. For each sample, we first randomly choose a
scale from a predefined list, then choose a random crop of size
$(224 \times 224) \times scale$. The cropped region is resized to
$224 \times 224$ before being input into the network. For Spatial-CNN,
three scales $[1,0.875,0.75]$, for Motion-CNN, we use two scales
$[1,0.875]$.

\begin{figure*}[]
\centering
\includegraphics [width=.96\textwidth] {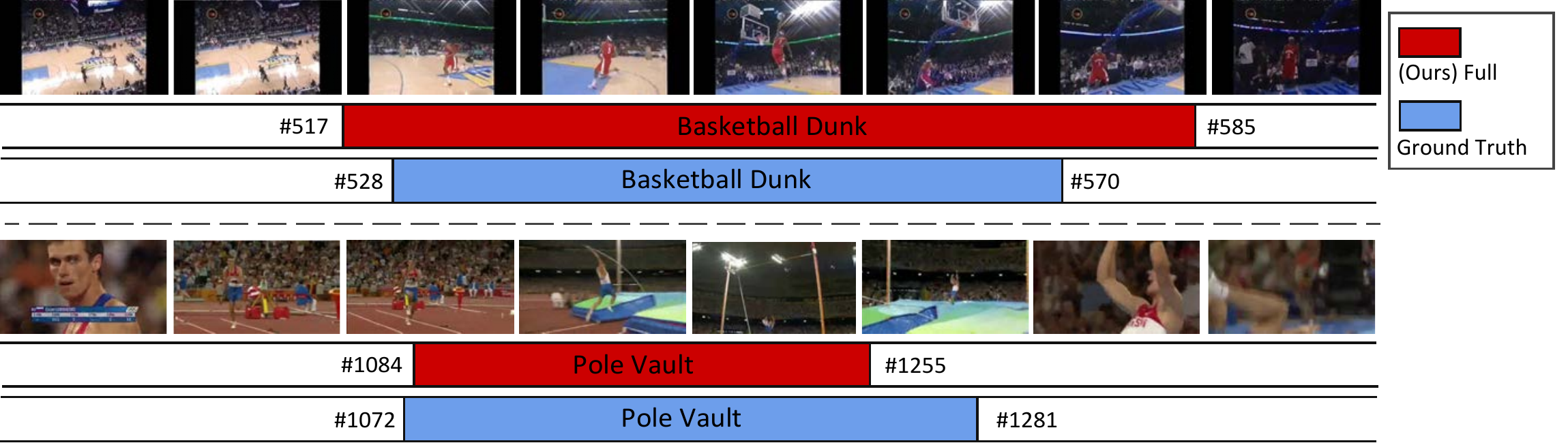}
\caption{Example detections from our system. Frame numbers are given at
  the boundaries of each action.}
\label{fig:example1}
\end{figure*}

\noindent {\bf Postprocessing.} We divide each testing video into
overlapping 20-second snippets with an 18-second overlap between
neighboring snippets, and perform localization independently for each
snippet. Subsequently, we merge predictions across these snippets. We
set the number of action instances to $K = 100$, as experimentally
$K\ge 100$ does not improve recall on our validation set. The temporal
action window scores from Equation~\ref{equa:confidence2} are prone to
giving higher scores to longer windows, so we additionally normalize
the confidence of each window by its window length. Furthermore, we
multiply confidence scores by action duration priors as in
\cite{DBLP:conf/iccv/OneataVS13} to encourage action windows to have
reasonable lengths. After generating all candidates, we filter those
with large overlap using non-maximum suppression.

\noindent {\bf Balanced training.} Middle frames are more prevalent
than start and end frames, so to prevent the network from becoming
biased towards middle frames, we divide each middle frame's score by
the total window length during training. In addition, since the manual
annotations of start and end of actions are relatively noisy, we
randomly sample the start frame from the first $10\%$ of frames and the
end from the last $10\%$. Middle frames are sampled from the middle
$80\%$.

\noindent {\bf Evaluation Metric.} We use mean Average Precision (mAP)
to measure localization performance as in \cite{THUMOS15}. We count a
detection as correct if it predicts the correct action label and its
intersection over union (IOU) with ground truth is larger than some
overlap threshold $\sigma$.

\begin{figure*}[]
\centering
\includegraphics [width=\textwidth] {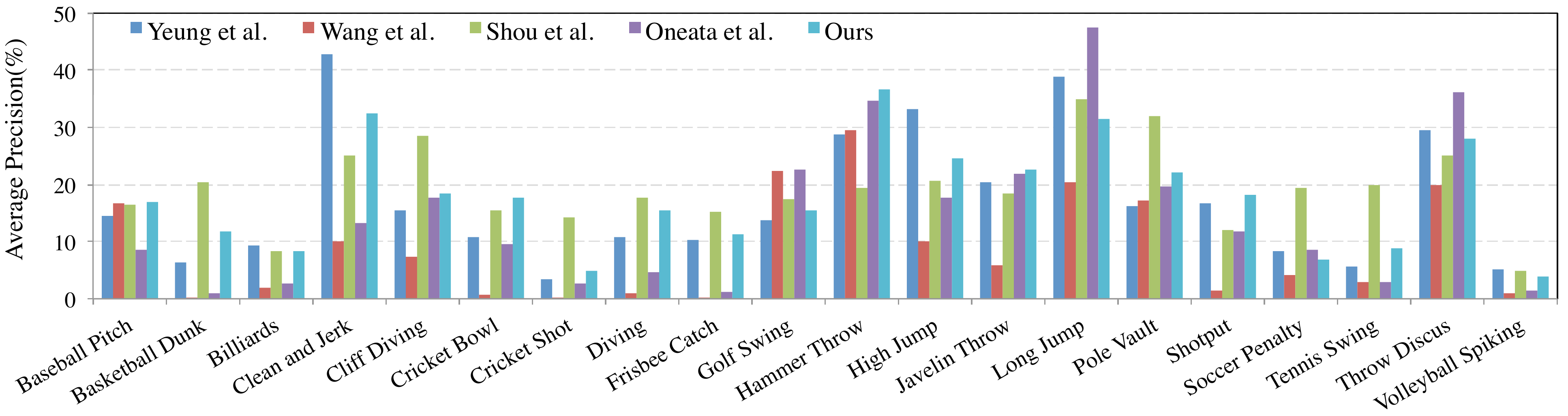}
\vspace{-6mm}
\caption{Per-class average precision on THUMOS '14 at overlap
  threshold $\sigma=0.5$.}
\label{fig:class1}
\vspace{-4mm}
\end{figure*}

\begin{figure*}[]
\centering
\includegraphics [width=.96\textwidth] {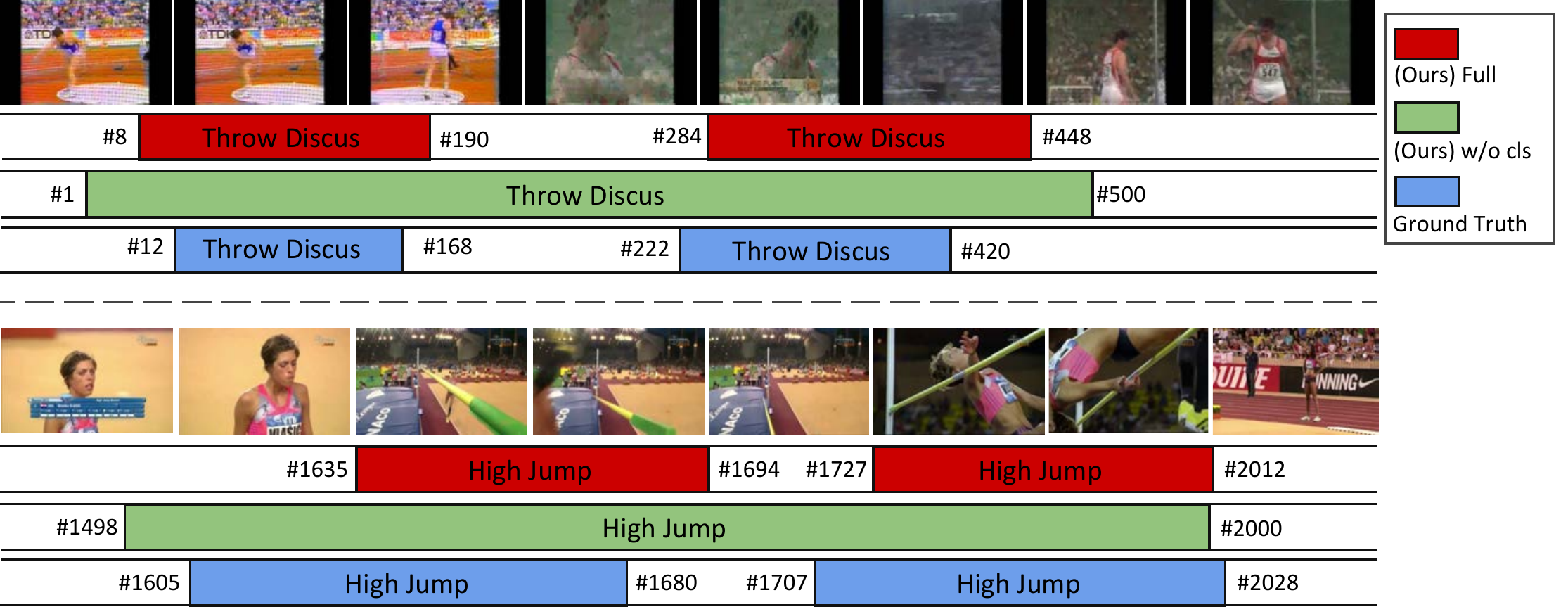}
\caption{Example detections from our full system compared with our
  system trained without the classification loss $\mathcal{L}_{cls}$.}

\label{fig:example2}
\end{figure*}

\subsection{Results on THUMOS 2014}

THUMOS '14 includes 20 sports action classes for temporal detection,
with over 13K short clips and background videos for training, 1010
untrimmed validation videos, and 1574 untrimmed videos for testing. In
our experiments, we use the training clips, background, and validation
videos for training, and report results on the untrimmed test videos.
During training, we crop each validation video to short 800-frame
clips which contain one single action instance. We augment the
training dataset by splicing action instances from the training and
validation clips with background videos and validation videos in which
no instances of the 20 classes appear. In total, we generate 42000
action clips for training. We choose hyperparameters based on results
on a withheld subset of the validation videos. We train Spatial-CNN
and Motion-CNN for 16K and 20K iterations, respectively. We then
finetune the two-stream network for 2K additional iterations. At test
time, we downsample all videos to 5fps, and filter out videos that are
unlikely to contain any of the 20 action classes. We do this by
averaging their frame-level class scores from action recognition
models \cite{DBLP:journals/corr/WangXW015} finetuned on THUMOS'14. In
Figure~\ref{fig:example1}, we show example detections on the THUMOS'14
test set.

\begin{table}[]
\centering
\begin{tabular}{l|c|c|c|c|c}
  \multicolumn{6}{c}{Comparison with State-of-the-Art} \\
  \toprule \hline 
  overlap threshold $\sigma$                               & 0.1           & 0.2           & 0.3           & 0.4           & 0.5                   \\ \hline  \bottomrule  
  Karaman {\it et al.}~\cite{Karaman}                      & 1.5           & 0.9           & 0.5           & 0.3           & 0.2                   \\ \hline
  Wang {\it et al.}~\cite{wang2014action}                  & 19.2          & 17.8          & 14.6          & 12.1          & 8.5                   \\ \hline
  Oneata {\it et al.}~\cite{DBLP:conf/iccv/OneataVS13}     & 39.8          & 36.2          & 28.8          & 21.8          & 15.0                  \\ \hline
  Shou {\it et al.}~\cite{DBLP:journals/corr/ShouWC16}    & 47.7          & 43.5          & 36.3          & \textbf{28.7} & \textbf{19.0}         \\ \hline
  Yeung {\it et al.}~\cite{DBLP:journals/corr/YeungRMF15}  & 48.9          & 44.0          & 36.0          & 26.4          & 17.1                  \\ \hline
  Richard {\it et al.}~\cite{DBLP:journals/cvpr/Alexander16}& 39.7         & 35.7          & 30.0          & 23.2          & 15.2                  \\ \hline
  Ours (full)                                              & \textbf{51.0} & \textbf{45.2} & \textbf{36.5} & 27.8          & 17.8                  \\ \hline
  \bottomrule
\end{tabular}
\vspace{-3mm}
\caption{The mean average precision (mAP) of different methods for varying overlap thresholds. Our system achieves state-of-the-art performance for $\sigma=0.1, 0.2, 0.3$.}
\label{tab:comparison}
\vspace{-4mm}
\end{table}

\begin{table}[t!]
\centering
\begin{tabular}{l|c|c|c|c|c}
  \multicolumn{6}{c}{Ablation Study} \\
  \toprule \hline
  overlap threshold $\sigma$                               & 0.1           & 0.2           & 0.3           & 0.4           & 0.5                   \\ \hline \bottomrule
  Baseline                                                 & 18.5          & 10.2          & 4.5           & 1.8           & 0.2                   \\ \hline
  \textit{w/o cls + sme}                                   & 40.5          & 28.8          & 23.2          & 16.4          & 13.2                  \\ \hline
  \textit{w/o cls}                                         & 42.5          & 32.6          & 27.8          & 19.6          & 15.7                  \\ \hline
  \textit{w/o sme}                                         & 48.0          & 42.2          & 33.0          & 24.8          & 16.2                \\ \hline
\textit{w/o prior} & 50.7 & 45.0 & 36.2 & 27.4 & 17.5
\\ \hline
    Ours (full)                                            & \textbf{51.0} & \textbf{45.2} & \textbf{36.5} & \textbf{27.8} & \textbf{17.8}        \\ \hline \bottomrule
  \multicolumn{6}{c}{Separate networks}  \\
  \toprule \hline                       
  \textit{spatial}                                         & 46.2          & 40.3          & 31.5          & 23.2          & 16.0  \\ \hline
  \textit{motion}                                          & 47.6          & 44.0          & 35.6          & 25.8         & 16.9  \\ \hline
  \textit{late fusion}                                     & 46.0          & 43.2          & 32.8          & 24.0          & 14.5  \\ \hline
    Ours (full)                                            & \textbf{51.0} & \textbf{45.2} & \textbf{36.5} & \textbf{27.8} & \textbf{17.8}        \\ \hline \bottomrule
  \bottomrule
\end{tabular}
\vspace{-3mm}
\caption{Ablation experiments for the structured objective (top) and the two-stream architecture (bottom). The full model outperforms all other configurations.}
\label{tab:ablation}
\vspace{-6mm}
\end{table}

We report results at varying overlap thresholds in
Table~\ref{tab:comparison} and compare with existing systems. Our
model outperforms state-of-the-art when the overlap threshold $\sigma$
is $0.1$, $0.2$ and $0.3$, and achieves competitive results for $0.4$
and $0.5$. This indicates that our system can distinguish action
instances from background frames even when precise localization is
difficult. Additionally, we provide per-class average precision
results in Figure \ref{fig:class1}. Our system achieves the best
performance on 5 of the 20 actions. For a few actions, namely
\emph{Billards}, \emph{Cricket Shot}, \emph{Tennis Swing} and
\emph{Volleyball Spiking}, we get a relatively low average precision.
This could be due to two main reasons: (1) these action instances are
short and thus the action scores are relatively noisy compared to longer
actions and (2) these actions often occur in rapid succession, making
it easy to merge adjacent action instances.

\noindent {\bf Ablation Study} In order to show the contribution of
each component in our system, we experiment with eight variants of the
full pipeline. The results are reported in Table~\ref{tab:ablation}.

In \textit{baseline}, we do not train using a structured loss function
during training. Instead, we train our model to perform mutually
exclusive framewise classification, and subtract the mean confidence
to form signed confidence scores, and perform localization using SMS.
In \textit{w/o cls}, we drop the structured localization loss
$\mathcal{L}_{cls}$ during training. In \textit{w/o sme}, we do not
model start-middle-end temporal evolution, and each action score is
computed as the average of frame-level action classification scores.
In \textit{w/o cls + sme}, we use drop $\mathcal{L}_{cls}$ and also do
not model temporal evolution. Dropping each of these components
results in a significant drop in performance, indicating that both
temporal evolution and the structured classification loss are
important to facilitate training and accurate localization. In
\textit{w/o prior}, we drop the action duration priors, which results
in a small drop in performance. In Figure~\ref{fig:example2}, we give
examples of detections produced when the $\mathcal{L}_{cls}$ loss is
dropped, localization becomes less precise. We also perform a separate
ablation study on the components of the two-stream architecture. We
evaluate using each stream individually (\emph{spatial} and
\emph{motion}), as well as simply averaging the two streams rather
than fine-tuning jointly (\textit{late fusion}). We find that the full
model outperforms \textit{late fusion}, suggesting the importance of
joint training of the two streams. The \textit{late fusion} does not
outperform separate networks, suggesting an incompatibility of
confidence scores from the two networks.

\subsection{Conclusions}

We present a framework for end-to-end training of temporal
localization that takes into account the \emph{temporal evolution} of
each action. We frame localization as a \emph{Structured Maximal Sum}
problem, and provide efficient algorithms for training and detection
in this framework. We show that modeling temporal evolution improves
performance, and demonstrate that our system achieves competitive
performance on the THUMOS '14 benchmark.

\subsubsection*{Acknowledgements}
This work is partially supported by a University of Michigan graduate
fellowship, the Natural Science Foundation of China under Grant No.
61672273, No. 61272218, and No. 61321491, and the Science Foundation
for Distinguished Young Scholars of Jiangsu under Grant No.
BK20160021.

{\small
\bibliographystyle{ieee}
\bibliography{mybib}
}

\end{document}